\newtheorem{theorem}{Theorem}
\newtheorem{definition}{Definition}
\newcommand{\BibTeX}{B\kern-.05em{\sc i\kern-.025em b}\kern-.08em\TeX}
\begin{document}


\begin{frontmatter}


\paperid{2377}


\title{Detecting Hidden Triggers: Mapping \\ Non-Markov Reward Functions to Markov}


\author[A]{\fnms{Gregory}~\snm{Hyde}\orcid{0000-0001-5608-5224}\thanks{Corresponding Author. Email: gregory.m.hyde.th@dartmouth.edu.}}
\author[A]{\fnms{Eugene}~\snm{Santos, Jr.}\orcid{0000-0003-2923-5113}}

\address[A]{Thayer School of Engineering at Dartmouth College, Hanover, NH, USA}


\begin{abstract}
Many Reinforcement Learning algorithms assume a Markov reward function to guarantee optimality. However, not all reward functions are Markov. This paper proposes a framework for mapping non-Markov reward functions into equivalent Markov ones by learning specialized reward automata, Reward Machines. Unlike the general practice of learning Reward Machines, we do not require a set of high-level propositional symbols from which to learn. Rather, we learn \emph{hidden triggers}, directly from data, that construct them. We demonstrate the importance of learning Reward Machines over their Deterministic Finite-State Automata counterparts given their ability to model reward dependencies. We formalize this distinction in our learning objective. Our mapping process is constructed as an Integer Linear Programming problem. We prove that our mappings form a suitable proxy for maximizing reward expectations. We empirically validate our approach by learning black-box, non-Markov reward functions in the Officeworld domain. Additionally, we demonstrate the effectiveness of learning reward dependencies in a new domain, Breakfastworld.
\end{abstract}

\end{frontmatter}

\section{Introduction}
Reinforcement Learning (RL) is traditionally formulated as a sequential decision-making task modeled under a Markov Decision Process (MDP) \cite{bellman_markovian_1957}. This implies that both the reward and transition functions are Markov, depending solely on the current state and action. However, in complex environments, these functions are not always known and cannot always be assumed to be Markov. In fact, it has been shown that there are behaviors that cannot be encoded by any Markov reward function under certain state representations \cite{abel_expressivity_2022}. To address non-Markov dynamics, researchers often extend the state representation to enforce the Markov property, though this is usually at the cost of overcomplicating the state representation or making it excessively sparse. For example, Mnih et al. \cite{mnih_playing_2013} stacked subsequent frames from the Atari games \cite{bellemare_arcade_2013} to encode a suitable representation for training Deep Q-Networks (DQNs). Similarly, Recurrent DQNs (RDQNs) have been used to encode state histories for similar purposes \cite{hausknecht_deep_2015}. However, these sparse, deep models obscure the interpretation of rewards. To better understand them, it is crucial to examine their dynamics under less sparse and more interpretable forms.


To this end, automata have been used to compactly represent the hidden or historical features of non-Markov rewards. While Deterministic Finite-State Automata (DFAs) have traditionally served this role \cite{brafman_ltlfldlf_2018, gaon_reinforcement_2019, furelos-blanco_induction_2021}, more recently, specialized reward automata, Reward Machines (RMs), have been used \cite{xu_joint_2020, rens_learning_2020, rens_online_2021, dohmen_inferring_2022}. We highlight two distinctions between DFAs and RMs: 1.) DFAs are inherently less expressive than RMs as they only \emph{accept} or \emph{reject} traces. Consequently, when rewards are complex, multiple DFAs must be learned \cite{gaon_reinforcement_2019}. In contrast, RMs can encode complex reward behavior in a single automaton. 2.) DFAs are only capable of representing patterns over $H = (S \times A)^{*}$, the history of states and actions. Conversely, RMs can represent patterns over $H = (S \times A \times \mathbb{R})^{*}$, which include rewards. This affords us a unique learning objective where rewards serve as memory allowing us to model their dependencies.

However, prior work on learning RMs assumes access to $P$, a set of high-level propositions, and $L: S \times A \times S \rightarrow 2^P$, a labeling function mapping transitions to instantiations in $P$. This makes much of this prior work ungeneralizable. We, instead, infer $L$ by identifying minimal patterns in $H=(S \times A \times \mathbb{R})^{*}$, called \emph{hidden triggers}, that differentiate reward outcomes. Rather than learn RMs directly, we map observed non-Markov rewards onto the cross-product of $U$, the state space of a candidate RM, and $S$, the observed state space, to form a Markov representation called the Abstract Reward Markov Decision Process (ARMDP). We infer an ARMDP, and thus its decomposable RM, by solving an Integer Linear Program (ILP), lending itself to powerful off-the-shelf discrete optimization solvers. 
\vspace{2pt} \\
\textbf{Contributions:} We introduce a novel algorithm for mapping non-Markov reward functions to equivalent Markov ones. We show how by leveraging $H=(S \times A \times \mathbb{R})^{*}$ in our learning objective we can dramatically expedite learning in cases with interdependent reward signals. We validate our approach by demonstrating our ability to learn black-box RMs from the Officeworld domain, originally introduced in the first RM paper \cite{icarte_using_2018}. We then evaluate the representative power of the ARMDP by extending DQN agents with ARMDP features to compare their learning profiles against various sparse RDQN variants. Finally, we validate our approach to represent complex reward function behavior in a new domain, Breakfastworld, featuring black-box RMs with interdependent reward signals. We show how by leveraging $H=(S \times A \times \mathbb{R})^{*}$ in these experiments we can expedite learning. Our main contributions are three-fold:

\begin{enumerate}
    \item We introduce a novel algorithm for learning RMs without assumed access to $P$ and $L$,
    \item We prove that the ARMDP is a suitable proxy for maximizing reward expectations under non-Markov rewards,
    \item We demonstrate the effectiveness of learning RMs under multiple interdependent reward signals.
\end{enumerate}

\section{Related Work}
Decision-making with non-Markov reward functions is often represented as a Non-Markov Reward Decision Process (NMRDP) \cite{thiebaux_decision-theoretic_2006}. An NMRDP is similar to an MDP except that the reward function, $R: H \rightarrow \mathbb{R}$, maps state and action histories, $H=(S \times A)^*$, to reward values. However, as $H$ can become unwieldy due to the curse of history, more succinct representations have been encoded using automata. The intuition of this approach lies in the fact that automata are finitely represented and offer a level of interpretation due to their structured form and symbolic representation.

Traditionally, the automata of choice has been the DFA, yielding promising results in identifying Markov representations for NMRDPs \cite{brafman_ltlfldlf_2018,gaon_reinforcement_2019,furelos-blanco_induction_2021}. More recently, specialized reward automata, RMs, have also been used \cite {icarte_using_2018}. While DFAs operate over an input alphabet and emit \emph{true} or \emph{false} over traces, RMs extend this expressiveness by emitting reward values along each transition. RMs are particularly appealing as they offer succinct encoding of non-Markov reward behavior and task decomposition. They have also been extended by a broad range of RL algorithms that exploit RM structure to learn effective or optimal (in the discrete case) policies \cite{camacho_ltl_2019}. These include but are not limited to, specialized Q-Learning (QRM), Counterfactual experiences (CRM), Hierarchical Reinforcement Learning (HRM) and Reward Shaping (RSRM) for RMs \cite{camacho_ltl_2019,icarte_reward_2022}.

Whether using a DFA or an RM, various strategies for learning them exist. One prominent strategy for learning automata is to use Angluin's L* algorithm \cite{angluin_learning_1987}. L* has been applied for both DFA \cite{gaon_reinforcement_2019} and RM \cite{rens_learning_2020,rens_online_2021,dohmen_inferring_2022} learning. Other strategies include Inductive Logic Answer Set Programming (ILASP) \cite{furelos-blanco_induction_2021} for DFAs and SAT solvers \cite{xu_joint_2020} or Tabu search \cite{toro_icarte_learning_2019} for RMs. Non-automata-based learning of non-Markov rewards has been accomplished via Linear Temporal Logic ($LTL$), \cite{li_reinforcement_2017,brafman_ltlfldlf_2018,camacho_learning_2021}. LTL can also be later translated into automata. Camacho et al. provide a helpful pipeline for transforming LTL formulae into RMs \cite{camacho_ltl_2019}.

A notable limitation of these prior works is that they assume access to a set of high-level symbols (the input language for the automata) and the relationship of the underlying state space to those symbols. In RM literature, these are denoted $P$ and $L: S \times A \times S \rightarrow 2^P$ where $P$ is a set of symbols, separate from the NMRDP, and $L$ is a mapping of transitions from the NMRDP to instantiations in $P$, respectively. A similar assumption is often made for DFA learning, with exceptions being the work of Gaon and Brafman \cite{gaon_reinforcement_2019} as well as Christoffersen et al. \cite{christoffersen_learning_2020} who learn symbolic representations from $H=(S \times A)^{*}$ to build their automata. Gaon and Brafman learn their DFA by applying L*, however, their approach was shown to be sample inefficient. Christoffersen et al. improved upon this by solving a discrete optimization problem for DFA learning, as outlined by Shvo et al. \cite{shvo_interpretable_2020}, which regularizes automata based on their size to better handle noise and enhance generalization. We improve upon these approaches by, instead, learning an RM without assumed access to $P$ and $L$. We show how, by learning RMs, we can model reward dependencies in a single automata by representing patterns in $H=(S \times A \times \mathbb{R})^*$.

\section{Preliminaries}
 \subsection{Reinforcement Learning and Non-Markov Reward}
RL is a type of self-supervised learning in which an agent learns to interact with its environment to maximize its long-term reward expectation \cite{sutton98}. It is an iterative process in which the agent receives feedback in the form of rewards or punishments to adjust its behavior. RL is formalized using the MDP. An MDP is a 6-tuple $(S, A, T, R, \gamma, \rho)$, where $S$ is the state space, $A$ is the action space, $T:S \times A \times S \rightarrow [0,1]$ is a Markov probabilistic transition function, $R:S \times A \times S \rightarrow \mathbb{R}$ is a Markov reward function mapping states, actions and states to rewards, $\gamma \in [0,1)$ is the discount factor and $\rho:S \rightarrow [0,1]$ is the initial state distribution. Briefly, NMRDPs are similar to MDPs in that they share $(S, A, T, R, \gamma, \rho)$; however, $R: H \rightarrow \mathbb{R}$ maps histories of states and actions to rewards, instead.

The solution for an MDP is a policy, $\pi: S \rightarrow A$, that maps states to actions. The value of a policy for any state, $s \in S$, at time point $t$ is the expected discounted return, defined as follows:
\begin{equation} \label{eq:val}
    V_{\pi}(s) = \mathbb{E}_\pi \left[ \sum_{k=0}^{\infty}\gamma^{k}r_{t+k+1} \mid s_t = s \right]
\end{equation}
where $\mathbb{E}_{\pi}$ is the expected reward from starting in $s$ and following $\pi$. The optimal policy, denoted $\pi_{*}$, is determined by solving for the optimal state value function:
\begin{equation} \label{eq:valop}
    V_{*}(s) = \max_{\pi} V_{\pi}(s)
\end{equation}
for all $s \in S$. We can write the optimal state-action value function in terms of the state value function as follows:
\begin{equation} \label{eq:qop}
    Q_{*}(s,a) = \mathbb{E}_{\pi} \left[r_{t+1} + \gamma V_{*}(s_{t+1}) \mid s_t = s, a_t = a \right]
\end{equation}
for all $s\in S$ and $a\in A$. Hence:
\begin{equation} \label{eq:optimize}
    V_{*}(s) = \max_{a}Q_{*}(s,a)
\end{equation}
Equation \eqref{eq:optimize} is often solved through a process called Value Iteration \cite{sutton98}. The optimal policy is realized by selecting actions according to $\max_{a}Q_{*}(s,a)$, otherwise known as a greedy policy. A popular model-free approach for learning Equation (\ref{eq:optimize}) is Q-Learning \cite{watkins_q-learning_1992}:
\begin{equation} \label{eq:ql}
    q(s_t,a_t) \xleftarrow \alpha \left[ r_{t+1} + \gamma V(s_{t+1}) \right]
\end{equation}
where $q(s_t,a_t)$ is updated slowly according to $\alpha \in [0,1)$. When updating Equation (\ref{eq:ql}) in an online fashion, typically an $\epsilon$-greedy strategy is employed where the agent follows a greedy policy but chooses a random action $\epsilon-$percent of the time. Both strategies are guaranteed to converge to the optimal policy in the limit as each state-action pair is visited infinitely often, but only under the assumption that $T$ and $R$ are Markov. Otherwise, expectations have no stable solution.

\subsection{Reward Machines}
RMs are a specialized type of automata imposed over an underlying state and action space, $S$ and $A$ \cite{icarte_using_2018}. RMs operate over a set of propositional symbols, $P$, representing high-level events, separate from $S$. RMs are defined as a 5-tuple, $(U, u_1, F, \delta_{u}, \delta_r)$, where $U$ is a finite state space, $u_1 \in U$ is the initial state, $F$ is a set of terminal states where $F \cap U = \emptyset$, $\delta_{u}: U \times 2^{P} \rightarrow U \cup F$ is a transition function and $\delta_{r}:U \times 2^{P} \times (U \cup F) \rightarrow \mathbb{R}$ is a reward function. A labeling function, $L:S \times A \times S \rightarrow 2^{P}$, connects $S$ and $A$ to the RM by mapping transitions to instantiations in $P$.
\begin{figure}
  \centering
  \includegraphics[width=0.35\textwidth]{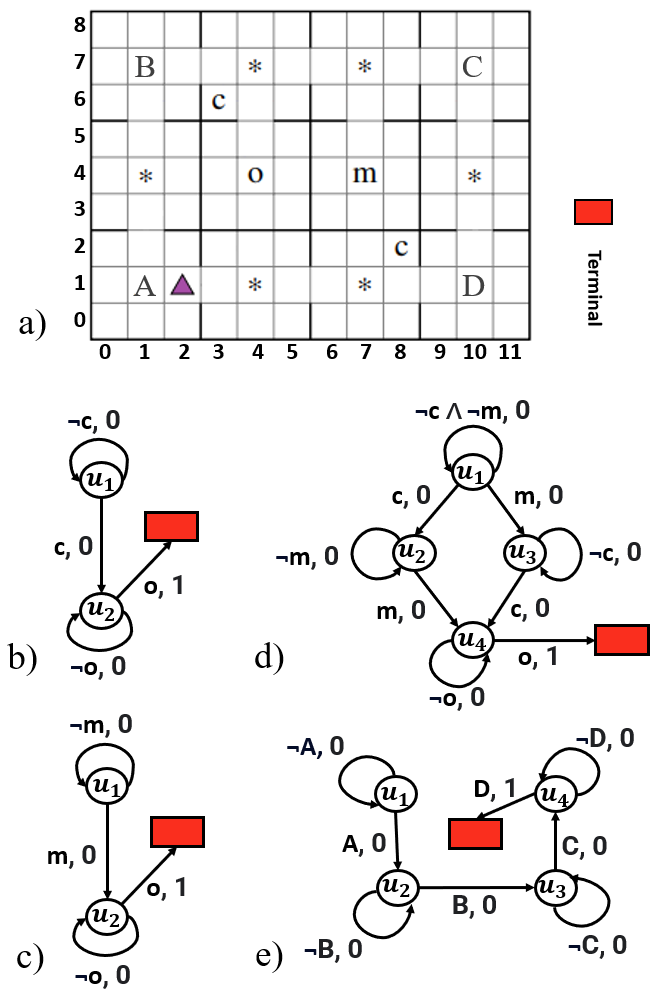}
  \caption{The Officeworld domain (a) with four Reward Machines, (b) deliver coffee to office, (c) deliver mail to office, (d) deliver coffee and mail to office and (e) patrol task sequencing A, B, C and D.}
  \label{fig:ow}
\end{figure}
RMs are capable of modeling reward dependencies based on histories that can be represented as a regular language (e.g., loops, conditions, and interleaving). RMs were initially demonstrated on the  Officeworld domain \cite{icarte_using_2018} shown in Figure \ref{fig:ow} (a). Tasks (b-e) represent RMs sensitive to $P=\{c, m, o, *, A, B, C, D\}$ where $c$ is a coffee tile, $m$, the mail tile, $o$, the office tile, $*$, a plant tile, and $A, B, C, D$ are room IDs. $\neg$, $\lor$ and $\land$ are logical operators for \emph{negation}, \emph{or}, and \emph{and}, respectively. The cross-product of an NMRDP and an RM can be used to form a Markov representation for predicting both $R$ and $T$.

\section{Approach}
We wish to map non-Markov reward signals into succinct Markov representations. We do so by inferring \emph{minimal} RMs that represent hidden or historical features that render the representation Markov. To make our approach generalizable to practical decision-making scenarios, we drop the assumed access to $P$ and $L$. For simplicity of notation, we drop notation on terminal states, $F$. 

\paragraph{Problem Statement} Given a finite set of finite-length trajectories, $T_o$, sampled from an NMRDP, $(S, A, T, R, \gamma, \rho)$, we learn a \emph{minimal} RM $(U, u_1, \delta_u, \delta_r)$ that renders rewards Markov to $S$, $A$ and $U$. \vspace{6pt}

While $R$ in the NMRDP is unknown, we assume it can be sampled from and is representable as an RM. Hereafter, we will refer to $R$ as a black-box RM. Our approach for solving this problem is to learn an ARMDP, which is simply the MDP cross-product between an RM and an NMRDP that enables the extraction of the associated RM. Learning of the ARMDP is formulated as an ILP. A \textit{minimal} RM is an RM with the smallest hidden state space $|U|$, and the least number of transition edges between different RM states.

The following sections are organized as follows: We begin by defining the ARMDP as the cross-product space between an RM and an NMRDP in Section \ref{sec:amdp}. Section \ref{sec:form} describes a passive learning approach for learning ARMDPs from $T_o$, by solving an ILP. We later make this an active learning approach in Section \ref{sec:active} by incorporating a Q-Learning agent. Section \ref{sec:extract} briefly outlines how to extract an RM from an ARMDP. Finally, Section \ref{sec:theo} describes the theoretical insights of ARMDPs as a proxy for the NMRDP.

\subsection{Abstract Reward Markov Decision Process} \label{sec:amdp}

An ARMDP is an MDP with additional properties sufficient for extracting/learning a valid RM. 

\begin{definition}\label{def:ARMDP}
    Given an NMRDP $(S, A, T, R, \gamma, \rho)$, RM $(U, u_1, \delta_u, \delta_r)$, and labeler, $L: S \times A \times S \rightarrow 2^P$, an ARMDP is an MDP, $(\tilde{S}, A, \tilde{T}, \tilde{R}, \gamma, \tilde{\rho})$, satisfying:
    \begin{enumerate}
        \item $\tilde{S} = S \times U$
        \item $\{u \mid (s, u) \in \tilde{S}, \tilde{\rho}((s, u)) > 0 \} = \{u_1\}$
        \item
            \begin{equation*}\label{eq:rewards}
                \resizebox{.99\linewidth}{!}{$
                    \displaystyle
                    \tilde{T}((s,u), a, (s', u')) = 
                    \begin{cases}
                        T(s, a, s') & \text{if\ } \delta_u(u, L(s, a, s')) = u' \\
                        0 & \text{otherwise}
                    \end{cases}$}
            \end{equation*}
        \item $\tilde{R}((s,u), a, (s', u')) = \delta_r(u, L(s, a, s'), u')$.
    \end{enumerate}
\end{definition}
\emph{Property 1} defines the state space of the ARMDP as the cross-product of $S$ (what we observe) and $U$ (what we do not observe). \emph{Property 2} imposes an initial state distribution compliant with $u_1 \in U$ being the initial RM state. \emph{Property 3} defines the transition function of the ARMDP as a composition function of $T$ and $\delta_u$. Finally, \emph{Property 4} defines the reward function of the ARMDP as $\delta_r$. 

The intuition behind the ARMDP is that if we assume rewards in $T_o$ are Markov to $S$ and $A$, conflicts are bound to arise. Consider $\delta_r: U \times 2^P \times U \rightarrow \mathbb{R}$, for some black-box RM, being rewritten as $\delta_r: U \times S \times A \times S \times U \rightarrow \mathbb{R}$ by invoking $L$. Because we do not observe $U$, we lack the specificity to predict $\delta_r$. Our goal is to find some candidate ARMDP, that renders rewards Markov to $S$, $A$, and $U$, by identifying \emph{hidden triggers} in $H=(S \times A \times \mathbb{R})^*$ that resolve all reward conflicts in $T_o$. 

\subsection{Formulation} \label{sec:form}
We infer candidate ARMDPs by solving an ILP that resolves reward conflicts over $T_{o}$. Let $\tau_m \in T_{o}$ be the m-th trace in $T_{o}$ of length $l_{m}$. Further, let $\tau_{m,n}=(s_n^m, a_n^m, r_{n+1}^m, s_{n+1}^m)$ be the n-th tuple of $\tau_m$. We define a reward conflict to be any two $\tau_{m,n}$ and $\tau_{\tilde{m}, \tilde{n}}$ where $(s_n^m, a_n^m, s_{n+1}^m) = (s_{\tilde{n}}^{\tilde{m}}, a_{\tilde{n}}^{\tilde{m}}, s_{\tilde{n}+1}^{\tilde{m}})$, but $r_{n+1}^m \neq r_{\tilde{n}+1}^{\tilde{m}}$. The motivation of our ILP is to map $T_{o}$ onto a valid ARMDP such that we resolve all reward conflicts observed in $T_o$. Intuitively, $\tilde{S}$ provides a degree of freedom over $U$ to do just that.


We define the matrix, $O = M \times N \times K \times K$, where $M = |T_{o}|$, $N$ reflects the largest $l_m$, $\forall \tau_m \in T_o$, and $K = |U|$ reflects the size of the finite state space for some candidate RM. Each $O_{m,n}$, then, is a $K \times K$ matrix that lays the foundation for \emph{Property 1} in the ARMDP:
\begin{equation*}
    O_{m,n} = \begin{pmatrix}
    O_{m,n,1,1} & O_{m,n,1,2} & \cdots & O_{m,n,1,K} \\
    O_{m,n,2,1} & O_{m,n,2,2} & \cdots & O_{m,n,2,K} \\
    \vdots & \vdots & \ddots & \vdots \\
    O_{m,n,K,1} & O_{m,n,K,2} & \cdots & O_{m,n,K,K}
\end{pmatrix}
\end{equation*}
Each $O_{m,n,i,j} \in O_{m,n}$ is a binary variable reflecting the decision to map $s_n^m$ to RM state $u_i \in U$ and $s_{n+1}^m$ to RM state $u_j \in U$. That is, if $O_{m,n,i,j} = 1$, then $\tau_{m,n} = (s_n^m, a_n^m, r_{n+1}^m, s_{n+1}^m)$ is mapped onto the corresponding ARMDP tuple of the form, $((s_n^m, u_i), a_n^m, r_{n+1}^m (s_{n+1}^m, u_j))$. As such, each $\tau_{m,n}$ introduces $K^2$ variables through its respective $O_{m,n}$ matrix. We constrain each $O_{m,n}$ to choose a single mapping by enforcing that $\forall m=1, 2, ..., M, \forall n = 1, 2, ..., l_m$:

\begin{equation} \label{eq:equal1}
    \sum_{i=1}^{K} \sum_{j=1}^{K} O_{m,n,i,j} = 1
\end{equation}
We further constrain $O_{m,n}$ where $n=1$ to satisfy the initial state requirement of \emph{Property 2} in the ARMDP. That is, $\forall m=1, 2, ..., M$:

\begin{equation} \label{eq:init}
    \sum_{j=1}^{K} O_{m,1,1,j} = 1
\end{equation}
Next, we ensure the continuity of each trajectory in the mapping process. That is, we encode agreement between subsequent mappings of all $O_{m,n}$ and $O_{m,n+1}$ pairs by enforcing the following constraint $\forall m=1, 2, ..., M, \forall n = 1, 2, ..., l_m - 1, \forall j = 1, 2, ..., K$:
\begin{equation} \label{eq:sub}
    \sum_{i=1}^{K} O_{m,n,i,j} = \sum_{j'=1}^{K} O_{m, n+1, j, j'}
\end{equation}
Equation (\ref{eq:sub}) simply forces $s_{n+1}^m$ to be mapped to the same RM state in both $\tau_{m,n}$ and $\tau_{m,n+1}$. Equations (\ref{eq:equal1}-\ref{eq:sub}) form the scaffolding of our mapping.


Thus far, all constraints have been applied over individual or subsequent tuples of $T_{o}$. However, it is necessary to form a consensus on the mapping for all observed traces onto the ARMDP to ensure \emph{Property 3} and \emph{Property 4}. We begin with \emph{Property 3}. 

\emph{Property 3} encodes a transition function, $\tilde{T}$, constructed over components $T$, from the NMRDP, and $\delta_u$, from the RM. Even though $\delta_u$ is unknown, we know that it must be deterministic. That is, for any transition, $(s, a, s') \in S \times A \times S$, and for any $u_i \in U$, $\delta_u(u_i, L(s, a, s'))$ must map to a single $u_j \in U$. Therefore, we must constrain $O$ to agree on the assignments of $u_i$ and $u_j$ over shared transition tuples. We construct the following indicator variable:
\begin{equation*}
    I_{s, a, s', i, j} =    \left[ \sum_{m=1}^M \sum_{\substack{n=1, \\
    (s, a, s') = (s_n^m, a_n^m, s_{n+1}^m)}}^{l_m} O_{m,n,i,j} \geq 1 \right]
\end{equation*}
$I_{s, a, s', i, j}$ is an indicator variable representing the sum of all $O_{m, n, i, j}$ in $O$ sharing the transition tuple $(s, a, s')$ with assignments $i, j \in {1, 2, ..., K}$. When $I_{s, a, s', i, j} = 1$, it acts as a \emph{hidden trigger} that highlights the sensitivity of $L$ to $(s, a, s')$ and enforces the transition $\delta_u(u_j, L(s, a, s')) = u_j$, in the RM, thus governing $\tilde{T}$. We later consider the sensitivity of \emph{hidden triggers} to rewards in \emph{Property 4}. 

Let $(S,A,S)_{o} = \{ (s_n^m, a_n^m, s_{n+1}^m) \mid m \in \{1, 2, ..., M\}, n \in \{ 1, 2, ..., l_m\}\}$ be the set of all unique transitions in $T_{o}$. Then to enforce the determinism of $\delta_u$, we impose that $\forall (s, a, s') \in (S, A, S)_{o}, \forall i = 1, 2, ..., K$:
\begin{equation} \label{eq:ambig}
    \sum_{j = 1}^{K} I_{s, a, s', i, j} \leq 1
\end{equation}
Consider that for any $(s, a, s') \in (S,A,S)_{o}$ and for any $i \in \{1, 2, ..., K\}$ that when we constrain the sum $\forall j=1, 2, ..., K$, in Equation (\ref{eq:ambig}), to be less than or equal to 1, we limit $\delta_u(u_i, L(s, a, s'))$ to have at most one outcome. With this, Equation (\ref{eq:ambig}) satisfies the determinism of $\delta_u$ and \emph{Property 3} of the ARMDP.

The encoding of \emph{Property 4} enforces the resolution of reward conflicts in $T_{o}$. While conflicts exist under the perspective of $S \times A \times S$, by mapping conflicting reward tuples onto $S \times U \times A \times S \times U$ we can separate them. Similar to our encoding of \emph{Property 3}, we do this by constraining $O$ to agree on the assignments of $u_i$ and $u_j$ over shared reward tuples by way of indicator variables:
\begin{equation*}
    I_{s, a, r, s', i, j} =    \left[ \sum_{m=1}^M \sum_{\substack{n=1, \\
    (s, a, r, s') = \tau_{m,n}}}^{l_m} O_{m,n,i,j} \geq 1 \right]
\end{equation*}
$I_{s, a, r, s', i, j}$ is an indicator variable representing the sum over all $O_{m, n, i, j}$ in $O$ sharing the reward tuple $(s, a, r, s')$ with assignments $i, j \in \{1, 2, ..., K\}$. When $I_{s, a, r, s', i, j} = 1$, it imposes the following reward emission, $\delta_r(u_i, L(s, a, s'), u_j) = r$, in the RM. 

Let $r(s, a, s') = \{ r_{n+1}^m \mid m \in \{1, 2, ..., M\}, \hspace{2pt} n \in \{ 1, 2, ..., l_m\}, \\ (s_n^m, a_n^m, s_{n+1}^m) = (s, a, s')\}$ be the set of all observed rewards for some $(s, a, s')$. Then to enforce the determinism of $\delta_r$, we impose that $\forall (s, a, s') \in (S, A, S)_{o}, \forall i = 1, 2, ..., K, \forall j = 1, 2, ..., K$:
\begin{equation} \label{eq:rambig}
    \sum_{r \in r(s, a, s')} I_{s, a, r, s', i, j} \leq 1
\end{equation}
Consider that for any $(s, a, s') \in (S,A,S)_{o}$ and for any $i, j \in \{1, 2, ..., K\}$ that when we constrain the sum $\forall r \in r(s, a, s')$, in Equation (\ref{eq:rambig}), to be less than or equal to 1, we limit $\delta_r(u_i, L(s, a, s'), u_j)$ to have at most one reward outcome, thus resolving our reward conflicts in $T_o$. Importantly, $I_{s, a, r, s', i, j}=1$ contextualizes its corresponding \emph{hidden trigger}, $I_{s, a, s', i, j}$, with reward. When rewards are interdependent (e.g., some upstream reward conflict determines a downstream reward conflict), rewards serve as memory, hence, we capture relationships in $H=(S \times A \times \mathbb{R})^*$. By Equation (\ref{eq:rambig}), we satisfy the determinism of $\delta_r$ and \emph{Property 4} of the ARMDP. 


Equations (\ref{eq:equal1}-\ref{eq:rambig}) form the entirety of the constraints for ARMDP learning. However, there are potentially many ARMDP solutions for any given $T_{o}$. As such, we are motivated to derive simple and interpretable models for representing reward dynamics. Therefore, we orient our objective and learning procedure toward finding the \emph{minimal} ARMDP, and thus the \emph{minimal} RM, that satisfies $T_{o}$. This is done in two ways. First, for any inference of an ARMDP, we start by assuming $K = |U|= 2$, thus dramatically limiting our search space. If $K = 2$ is determined to be insufficient to resolve reward conflicts (ILP infeasibility) we increment $K$ by 1 and try again. This is known as a \emph{deepening} strategy employed in various related works \cite{christoffersen_learning_2020, camacho_learning_2021}. Second, we limit the number of \emph{hidden triggers} that dictate transitions between different RM states (i.e., $\delta_u(u_i, L(s, a, s')) = u_j$ where $u_i \neq u_j$). We do so by minimizing the following objective:

\begin{equation} \label{eq:obj}
    z = \sum_{(s, a, s) \in (S, A, S)_{o}}\sum_{i=1}^{K}\sum_{\substack{j=1, \\
    i \neq j}}^{K} I_{s, a, s', i, j}
\end{equation}

We solve Equations (\ref{eq:equal1}-\ref{eq:obj}) using out-of-the-box ILP solvers in Gurobi \cite{gurobi} to infer viable ARMDPs. We note that all notation in this section was constructed assuming $L:S \times A \times S \rightarrow 2^P$ though it is straightforward to rewrite it as being sensitive to $L:S \rightarrow 2^P$. We provide the full ILP in Appendix A.

We also note a structural assumption that we make in later experimentation. That is, we only use $O_{m, n, i, j}$ where $j \geq i$ (i.e., the upper triangle matrix of each $O_{m,n}$). The implication of this is that we assume the RM to never return to a previous state, once left. While this might seem to limit the expressiveness of any resulting ARMDP, we note that our ILP simply unrolls these cycles expanding $U$ as needed. Because $T_{o}$ represents a finite set of finite length trajectories, learned ARMDPs will be equivalently expressive, but larger in some cases. For all RMs considered in this work, this structural assumption holds resulting in the \emph{minimal} ARMDP being learned while dramatically reducing learning time. We provide more details on this assumption in Appendix B.

\subsection{Extracting RMs from ARMDPs} \label{sec:extract}
We provide a brief outline for extracting an RM from an ARMDP. Given that $K=|U|$ is determined by our \emph{deepening} strategy, we assume $U$ is given. Additionally, by Equation (\ref{eq:init}) we designate $u_1 \in U$ to be the initial state. To extract the rules of $\delta_u$ and $\delta_r$ we simply iterate $\forall m=1, 2, ... M, \forall n=1, 2, ..., l_m, \forall i = 1, 2, ..., K, \forall j = 1, 2, ..., K$. If $O_{m,n,i,j} = 1$ then $\delta_u(u_i, L(s_n^m, a_n^m, s_{n+1}^m)) = u_j$ and $\delta_r(u_i, L(s_n^m, a_n^m, s_{n+1}^m), u_j) = r_{n+1}^m$. 

\subsection{Theoretical Considerations} \label{sec:theo}

Our ILP is constructed such that it infers an ARMDP that resolves reward conflicts in $T_{o}$. Because the ARMDP is simply just an MDP, we can apply out-of-the-box RL algorithms to solve it. However, it remains to be shown how the ARMDP might be a suitable proxy for the NMRDP for the purpose of maximizing reward expectations. We start with the weighted Reward Sum (RS) for some $\tau_m \in T_{o}$:
\begin{equation}
    RS(\tau_m) = \sum_{n=1}^{l_m} r_{n+1}^m \cdot T(s_n^m, a_n^m, s_{n+1}^m)
\end{equation}
Given that we assume the form of $R$ to be an RM and that $\delta_u$ and $\delta_r$, in an RM, are deterministic functions, $RS(\tau_m)$ reliably captures the reward observations essential for constructing a reward expectation, assuming the trajectory sequence is preserved. Similarly, we construct a weighted Abstracted Reward Sum (ARS) for some  $\tau_m \in T_{o}$ under the representation of the ARMDP to be:
\begin{equation}
    ARS(\tau_m) = \sum_{n=1}^{l_m} \sum_{i=1}^{K} \sum_{j=1}^{K} r_{n+1}^m \cdot T(s_n^m, a_n^m, s_{n+1}^m) \cdot O_{m, n, i, j}
\end{equation}
We remind readers that every $O_{m, n, i, j}$ reflects a binary variable that, if 1, indicates $\delta_u(u_i, L(s_n^m, a_n^m, s_{n+1}^m)) = u_j$. By referencing \emph{Property 3} of the ARMDP, multiplying $O_{m, n, i, j}$ into $T(s_n^m, a_n^m, s_{n+1}^m)$ to determine $\tilde{T}$ should be evident. We introduce Theorem \ref{theo:equiv}:
\begin{theorem}\label{theo:equiv}
$RS(\tau_m) = ARS(\tau_m)$ $\forall \tau_m \in T_{o}$.
\end{theorem}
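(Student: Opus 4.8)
The plan is to prove the identity trace by trace by collapsing the extra summation over RM-state assignments that the ARMDP mapping introduces. Fix an arbitrary $\tau_m \in T_{o}$ and look at the summand of $ARS(\tau_m)$ for a fixed index $n$. The factor $r_{n+1}^m \cdot T(s_n^m, a_n^m, s_{n+1}^m)$ does not depend on the assignment indices $i$ and $j$; only the binary decision variable $O_{m,n,i,j}$ does. So the first step is simply to pull that factor outside the double sum over $i,j$, rewriting
\begin{equation*}
    ARS(\tau_m) = \sum_{n=1}^{l_m} r_{n+1}^m \cdot T(s_n^m, a_n^m, s_{n+1}^m) \cdot \left( \sum_{i=1}^{K}\sum_{j=1}^{K} O_{m,n,i,j} \right).
\end{equation*}

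The second step is to invoke the ILP feasibility constraint. Equation (\ref{eq:equal1}) forces $\sum_{i=1}^{K}\sum_{j=1}^{K} O_{m,n,i,j} = 1$ for every $m$ and every $n \in \{1,\dots,l_m\}$, since each tuple $\tau_{m,n}$ is mapped to exactly one pair $(u_i,u_j)$ of RM states. Substituting this into the bracketed term above collapses it to $1$, and we obtain
\begin{equation*}
    ARS(\tau_m) = \sum_{n=1}^{l_m} r_{n+1}^m \cdot T(s_n^m, a_n^m, s_{n+1}^m) = RS(\tau_m).
\end{equation*}
Since $\tau_m$ was arbitrary, the equality holds for all $\tau_m \in T_{o}$.

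There is no genuine obstacle here: the substance of the theorem is that any mapping induced by a feasible $O$ is reward-preserving on each trace, and this is exactly what Equation (\ref{eq:equal1}) buys us. The only point worth stating carefully is that the argument uses \emph{only} the single-mapping constraint of Equation (\ref{eq:equal1}) — not Equations (\ref{eq:init})–(\ref{eq:rambig}), nor minimality, nor the upper-triangular structural assumption — so I would make explicit that the identity holds for every $O$ satisfying that one constraint. That framing clarifies that Theorem \ref{theo:equiv} is a statement about the soundness of the mapping scaffolding rather than about any particular learned ARMDP, and it sets up the subsequent use of the identity to argue that reward expectations are preserved.
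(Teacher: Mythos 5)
Your proof is correct and is essentially identical to the paper's: both factor $r_{n+1}^m \cdot T(s_n^m, a_n^m, s_{n+1}^m)$ out of the double sum and collapse $\sum_{i}\sum_{j} O_{m,n,i,j}$ to $1$ via Equation (\ref{eq:equal1}). Your closing remark that only this single constraint is needed is a fair observation, but it does not change the argument.
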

For space reasons, we provide the proof of Theorem \ref{theo:equiv} in Appendix C, but briefly state that by ensuring $RS(\tau_m) = ARS(\tau_m)$ $\forall \tau_m \in T_{o}$, the ARMDP is a suitable proxy for maximizing reward expectations for the NMRDP, assuming $T_o$ is representative. Furthermore, even if $T_o$ is not representative, we posit that any ARMDP derived from $T_o$ serves as a useful hypothesis for RL agents to interrogate in an active learning paradigm. We explore that intuition in the next section.



\subsection{RL for Non-Markov Rewards} \label{sec:active}
Our ILP formulation for learning ARMDPs is a form of passive learning. That is to say, while the ARMDP will sufficiently resolve reward conflicts found in $T_{o}$, $T_{o}$ may not be representative of the true reward dynamics. Therefore, we are interested in extending our approach into an active learning framework by applying Q-Learning over inferred ARMDPs. Our intuition is that while any instance of $T_o$, or any ARMDP inferred from it, may not be truly representative, Q-Learning agents might gradually expand $T_o$ in their pursuit to maximize reward expectations. In this way, each inferred ARMDP acts as a hypothesis that can be interrogated for consistency.


\begin{algorithm} [h]
    \caption{ARMDPQ-Learning}
    \textbf{Input}: NMRDP 
    \label{alg:algorithm}
    \begin{algorithmic}[1] 
        \STATE $\mathit{ARMDP} = \mathit{ARMDP}_0$
        \STATE $\pi_{\mathit{ARMDP}} = \pi_{0}$
        \STATE $T_{o} = []$
        \STATE $K = |U| = 2$
        \WHILE{$!done$}
            \STATE $\tau_m, \mathit{conflict} = \mathit{NMRDP}.\mathit{sim}(\mathit{ARMDP}, \pi_{\mathit{ARMDP}})$
            \IF{$\mathit{conflict}$}
                \STATE $T_{o}.append(\tau_m)$
                \STATE $\mathit{solved} = \mathit{False}$
                \WHILE{$!\mathit{solved}$}
                    \STATE $\mathit{ARMDP}, \mathit{solved} = \mathit{solve\_ILP}(T_{o}, K)$
                    \IF{$!solved$}
                        \STATE $K = K+1$
                    \ENDIF
                \ENDWHILE
                \STATE $\pi_{\mathit{ARMDP}} = \pi_{0}$
            \ELSE
                \STATE $\pi_{\mathit{ARMDP}} = \mathit{update\_qsa}(\tau_m)$
            \ENDIF
        \ENDWHILE
    \end{algorithmic}
\end{algorithm}

We present ARMDPQ-Learning, with pseudocode available in Algorithm \ref{alg:algorithm}. The agent applies a \emph{deepening} strategy by initially assuming $S$ and $A$ form an MDP (i.e., $|U|=1$). We denote this ARMDP as ARMDP$_0$ (line 1). $K=|U|=2$, is used for subsequent ARMDP inference. The agent policy, $\pi_{\mathit{ARMDP}}$, is initially set to $\pi_0$, a policy with all Q-values set to 0 (line 2). We use the NMRDP to simulate some $\tau_m$ under the perspective of the current ARMDP, selecting actions according to $\pi_{\mathit{ARMDP}}$ (line 6). If a conflict is found we add $\tau_m$ into $T_{o}$ (line 8) and solve for a new ARMDP (line 11). By adding only conflicting $\tau_m$, we maintain a smaller $T_{o}$ for more efficient ILP solving. If the problem is infeasible we increment $K$ by 1 (line 13) until there is a solution. As each ARMDP presents a new perspective, we reset $\pi_{\mathit{ARMDP}}$ back to $\pi_{0}$ (line 16). Assuming no conflict is found for $\tau_m$, we update $\pi_{\mathit{ARMDP}}$ with $\tau_m$ according to Q-Learning updating (line 18). Note that line 18 is shorthand. In reality, tuples are updated in an online fashion while simulating the trajectory.

\section{Experiments}
We briefly outline our experiments. Experiment 1 tests the efficacy of ARMDPQ-Learning in the Officeworld domain. Experiment 2 assesses the representative power of ARMDPs by comparing learning profiles of DQNs using $\tilde{S}$ versus various RDQN architectures only using $S$. Finally, Experiment 3 demonstrates the efficacy of learning RMs over DFAs given their ability to learn over $H=(S \times A \times \mathbb{R})^{*}$.

\subsection{Experiment 1 - Officeworld} \label{exp:1}
We validated our approach by applying ARMDPQ-Learning on the four black-box RMs (Tasks (b-e)) from the Officeworld domain visualized in Figure \ref{fig:ow}. We hid the RM as well as $P$ and $L$ from the agent. We refer readers to Appendix D for details on hyper-parameter selection, but note that we used an $\epsilon$-greedy policy and set $\epsilon$ to 0\% after 90,000 episodes to exploit learned behavior. Results for Experiment 1 can be seen in Figure \ref{fig:exp1_smol}.

\begin{figure}
    \centering
    \includegraphics[width=0.4\textwidth]{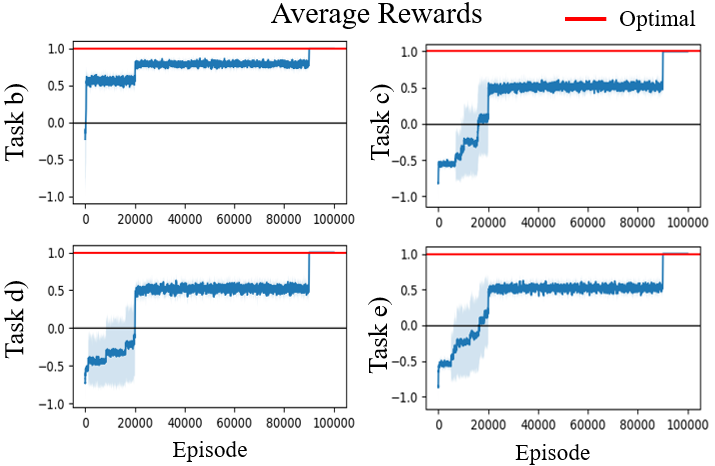}
    \caption{Average rewards with standard deviations, evaluated over 10 trials using ARMDPQ-Learning on Tasks (b-e) in the Officeworld domain.}
    \label{fig:exp1_smol}
\end{figure}

The agent learned to perform optimally in all tasks, suggesting that a meaningful representation was learned. Task (b), (c), and (e) all arrived at $|U|$ reflecting the ground truth model (2, 2, and 4, respectively). Conversely, Task (d) arrived at $|U| \approx 3.2$. Because the coffee and the mail tiles can be achieved in any order, occasionally the agent would lock on to one solution, and not the other, requiring less RM states. $T_{o}$ remained relatively small for all tasks, with $|T_{o}|\approx$ 4, 8, 39, and 44 for Tasks (b-e), respectively. We also tracked the cumulative solve times for all ILP solves under each task to be $\approx$ 0.1, 0.6, 4,200, and 12,000 seconds, respectively. We include an extension to Figure \ref{fig:exp1_smol} in Appendix D, containing figures for $|U|$, $|T_{o}|$ and cumulative solve times.

In all experiments, we noticed high variability in early performance reflected in high standard deviations between episodes 0-20,000. We attribute this to identifying the wrong pattern in $H$ during early conflict resolution. Interestingly, because these early solutions were so poor, they were easily testable by the Q-Learning agent. This lends some credence to our initial intuition of the Q-Learning agent serving as an important active learner capable of interrogating its own perspectives and updating in the face of new evidence.

\subsection{Experiment 2 - ARMDP Representations}
Previously, Christoffersen et al. compared the learning rates of Q-Learning agents over NMRDPs, using DFAs, to sparse, recurrent, deep RL models \cite{christoffersen_learning_2020}. However, in smaller discrete state spaces, Q-Learning has the advantage of updating state-action values directly given their simpler, tabular representation. In contrast, deep RL models must learn approximate solutions using complex computations over large batches of data. It remained to be shown how beneficial it could be to represent the state space of a DQN with $\tilde{S}$ instead of $S$. This would allow for a more even comparison of training profiles against RDQN architectures that have to learn a Markov representation over histories of $S$.

For each Task (b-e), in Experiment 1, we extracted $\tilde{S}$, from the finalized ARMDP. We used $\tilde{S}$ as the feature space for an Abstracted DQN (ADQN). As the ADQN explored each task we updated $\tilde{S}$ according to its respective ARMDP. In parallel, we trained recurrency-based models using $S$, namely, an RDQN, a Long-Short Term Memory DQN (LSTMDQN), and a Gated Recurrent DQN (GRUDQN). For a baseline, we also trained a DQN using $S$.

\begin{figure}
    \centering
    \includegraphics[width=0.4\textwidth, height=0.4 \textheight]{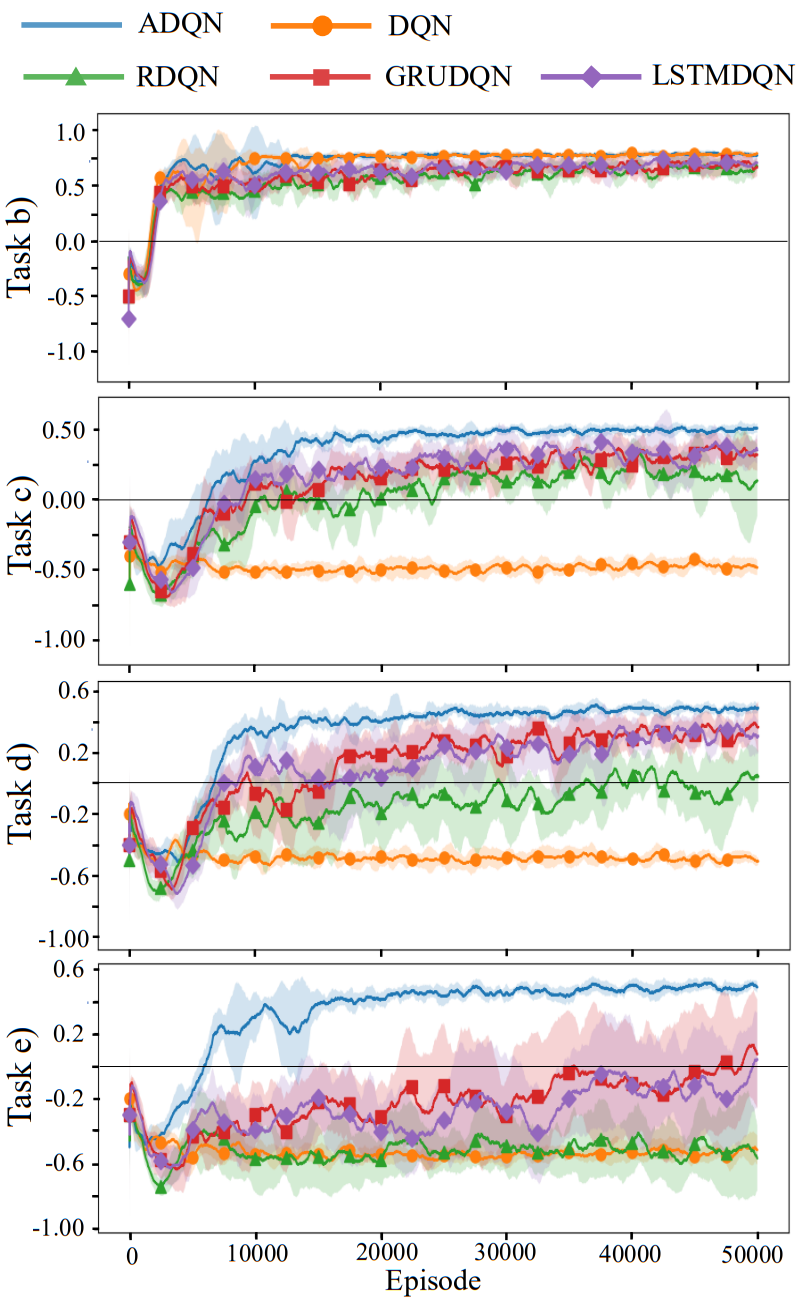}
    \caption{Average rewards with standard deviations, evaluated over 10 trials using DQN models on Tasks (b-e) in the Officeworld domain.}
    \label{fig:exp2}
\end{figure}
    
Rewards were very sparse for the Officeworld tasks and exploration via an $\epsilon$-greedy policy was often insufficient. As we didn't seek to demonstrate the efficacy of RL strategies under sparse rewards, we seeded each model with an optimal trajectory every 10th trajectory. Results for Experiment 2 can be seen in Figure \ref{fig:exp2}.

In all tasks, the ADQN model learned at the rate of, or out-learned, all other models. This became more apparent as the complexity of the task increased (Task (e) being the most complex). We note that for Task (b), the baseline DQN performed just as well as the other models only because the shortest path to the goal included the coffee tile. Interestingly, for Task (e), the RDQN performed just as poorly as the baseline DQN model while the LSTMDQN and GRUDQN models (while learning slowly) appeared to be uncovering the relevant hidden mechanisms of the underlying RM. We hypothesize that the \emph{forgetting} component of the GRUDQN and LSTDQN is a key differentiating factor in their learning ability. Still, we remind readers that uncovering $U$, as discussed in Experiment 1, involves a non-trivial cost before ADQN training can commence.

\subsection{Experiment 3 - Breakfastworld}
To evaluate the effectiveness of learning our RMs over $H=(S \times A \times \mathbb{R})^{*}$, rather than $H=(S \times A)^{*}$, we introduce a new domain, Breakfastworld, seen in Figure \ref{fig:bfw} (a). We include two tasks, Tasks (b) and (c), containing multiple interdependent reward signals. The domain uses $P=\{c, w, e, l\}$ representing propositions for \emph{cooking}, \emph{washing}, \emph{eating} and \emph{leaving}, respectively. Task (b) contains an RM wherein the agent is meant to cook, eat, and then leave. $(u_1, c, u_2)$ returns a reward of -0.1, reflecting a time cost. This time cost is experienced in a non-linear fashion as the agent grows bored and spending another turn cooking returns a reward of -0.3, $(u_2, c, u_3)$. However, this patience pays off in $(u_3, e, u_4)$, where eating a well cooked meal returns a reward of 2. In contrast, $(u_2, e, u_5)$ returns 1, reflecting an undercooked meal. Finally, $(u_4, l, term)$ and $(u_5, l, term)$ reflect the last reward dependency, returning the cumulative reward for each path in the RM, encouraging episode completion. Task (c) builds on Task (b) by adding a reward dependency for washing. The penalty for washing after cooking once, $(u_6, w, u_7)$, is less than after cooking twice, $(u_4, w, u_5)$.

\begin{figure}
  \centering
  \includegraphics[width=0.4\textwidth, height=0.4\textheight]{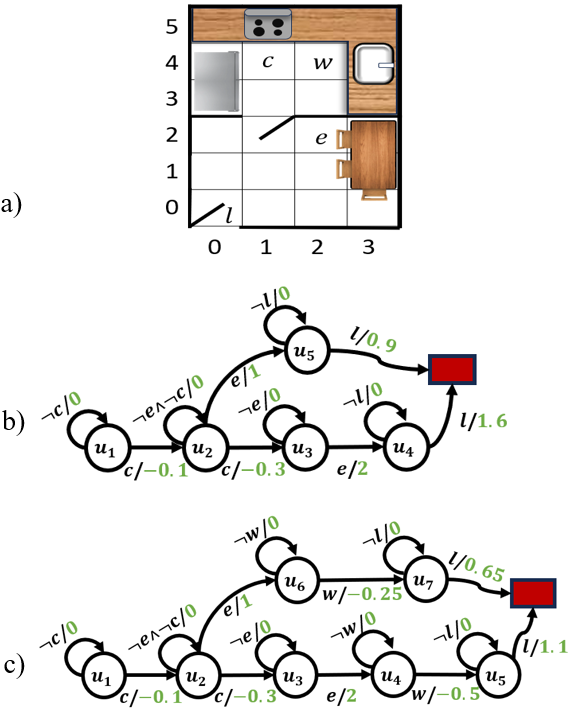}
  \caption{The Breakfastworld domain (a) with two RMs, (b) to cook, eat and then leave, (c) to cook, eat, wash, and then leave.}
  \label{fig:bfw}
\end{figure}

Both tasks reflect RMs more complex in terms of $|U|$ than Task (e) in Experiment 1. However, we demonstrate that RM learning benefits in cases where rewards are interdependent as upstream reward conflict resolution results in natural splits for downstream reward conflict resolution. Conversely, DFA learning suffers under such conditions as each reward conflict must be represented by its own DFA which is learned independently from any other DFA \cite{gaon_reinforcement_2019}, hence they only learn according to $H=(S \times A)^{*}$. 

We demonstrate how advantageous it can be that our ILP naturally learns according to $H=(S \times A \times \mathbb{R})^{*}$ by applying ARMDPQ-Learning to Tasks (b) and (c). We then compare this to applying ARMDPQ-Learning on RMs equivalent in structure to Task (b) and (c), but that only emit cumulative rewards on terminal transitions. All other rewards are made 0. We refer to Tasks (b) and (c), in Figure \ref{fig:bfw}, as Task (b) Full and Task (c) Full, respectively. Conversely, the RMs that only emit the cumulative rewards are referred to as Task (b) Cumulative and Task (c) Cumulative, respectively. The purpose of these cumulative tasks is to form a problem that produces reward conflicts only on the $l$ tile that can not be contextualized by upstream reward conflicts. This would be similar to learning DFAs to parse the different possible rewards on the $l$ tile only using $H=(S \times A)^{*}$. We show results for Experiment 3 in Table \ref{tab:exp3}.

In all solvable cases of ARMDPQ-Learning, the agent achieved optimal performance and identified the correct $|U|$, for each task. We include an extension figure to Table \ref{tab:exp3} in Appendix F containing such results. Using Task (b) Full our cumulative solve time for all ILPs was dramatically smaller ($\approx$ 4.4 seconds) than when using Task (b) Cumulative ($\approx$ 1,102.9 seconds). This can be attributed to Task (b) Full requiring only a fourth of the number of trajectories to solve than Task (b) Cumulative, likely due to the leveraged context of upstream rewards. More impressive were the results for Task (c) Full which resulted in the cumulative solve time of $\approx$ 153.8 seconds. In no instance of attempting to learn in Task (c) Cumulative did we arrive at a stable solution in 100,000 seconds ($>$27 hours).
\begin{table}
    \centering
    \begin{tabular}{lll}
        \toprule
        Task  & $|T_{o}|$ & Solve Time (s)\\
        \midrule
        Task (b) Full           & 6.2 $\pm$ 1.1     & 4.4 $\pm$ 3.3 \\
        Task (b) Cumulative     & 22.4 $\pm$ 8.8    & 1102.9 $\pm$ 1068.8  \\
        Task (c) Full           & 10.8 $\pm$ 3.0    & 153.8 $\pm$ 197.8 \\
        Task (c) Cumulative     & n/a               & n/a\\
        \bottomrule
    \end{tabular}
    \caption{Results of ARMDPQ-Learning evaluated on Tasks (b) and (c) from the Breakfastworld domain on 10 random trials.}
    \label{tab:exp3}
\end{table}

\section{Conclusions}
In this work, we introduced a formulation for learning RMs from non-Markov rewards by solving an ILP to learn an ARMDP. The ARMDP represents the cross-product between an RM and the underlying state space of an NMRDP. Importantly, we learn the ARMDP, and thus the RM, without knowledge of $P$ or $L$, by identifying \emph{hidden triggers} that govern the rules of $L:(S \times A \times S) \rightarrow 2^P$. We also demonstrated how effective it can be to learn our automata according to $H=(S \times A \times \mathbb{R})^{*}$, rather than $H=(S \times A)^{*}$, a limitation of DFAs. We proved that the ARMDP is a suitable proxy for maximizing the reward expectations of the NMRDP for $T_{o}$. We validated our approach by applying ARMDPQ-Learning to train Q-Learning agents over black-box RMs. We also evaluated the representative power of the ARMDP by representing the state space of a DQN as the state space of an ARMDP. We compared the training profiles of these DQNs to the training profiles of various RDQN architectures that had to learn Markov representations over the sparse histories they encode.

An important aspect of this work is that the RMs we learn are inherently more interpretable than the sparse representation of histories in $H=(S \times A \times \mathbb{R})^{*}$. This is in part because we learn \emph{minimal} RMs that encode $H$ under the finite, structured, and symbolic representation of an automaton, but also because interpreting an RM doesn't require a comprehensive understanding of $S$ and $A$ outside of what the labeler encodes. This is a consequence of the RM being entirely decomposable from the ARMDP. For example, the RM in Figure 1 Task (b) conveys the need to bring coffee to the office without ever needing to describe the sequence of states and actions to achieve that goal. Since RL is predicated on the assumption that the reward function is the most concise, robust, and transferable definition of a task \cite{ng_algorithms_2000}, we posit that RMs emerge as strong candidates for reward function representation and interpretation.

RM representations also carry important weight for the field of Inverse RL (IRL). In IRL, the goal is to learn a reward function from behavioral data that motivated that behavior \cite{ng_algorithms_2000,tan_inverse_2017,choi_inverse_2011}. While most approaches assume a Markov reward function, some of this work has been focused on learning rewards in Partially Observable MDPs (POMDPs) \cite{choi_inverse_2011} and in Semi MDPs (SMDPs) \cite{tan_inverse_2017}. We think that relaxing the Markov assumption in a principled way, analogous to this work, could yield insight into hidden abstractions made by a decision-maker. Moreover, given their interpretable form, it might shed light on what these hidden abstractions represent.

Lastly, we generally make note that this work is limited in that it assumes $S$ and $A$ are discrete. As such, we would like to consider automata extraction methods, in the future, using recurrent models \cite{tino_mealy_1995,omlin_dfas_1996,weiss_extracting_2020} as this would extend our approach into the continuous case. Weiss et al. have demonstrated promising results in this regard \cite{weiss_extracting_2020}. Similarly, this work assumes $R$ to be deterministic to some history. In scenarios where $R$ is stochastic or noisy, we posit that we might be able to handle such cases by sampling histories to assess if the reward distributions differ in some statistical sense.

\begin{ack}
This research was funded in part by the Air Force Office of Scientific Research Grant No. FA9550-20-1-0032 and the Office of Naval Research Grant No. N00014-19-1-2211.
\end{ack}



\bibliography{mybibfile}

\begin{thebibliography}{31}
\providecommand{\natexlab}[1]{#1}
\providecommand{\url}[1]{\texttt{#1}}
\expandafter\ifx\csname urlstyle\endcsname\relax
  \providecommand{\doi}[1]{doi: #1}\else
  \providecommand{\doi}{doi: \begingroup \urlstyle{rm}\Url}\fi

\bibitem[Abel et~al.(2021)Abel, Dabney, Harutyunyan, Ho, Littman, Precup, and Singh]{abel_expressivity_2022}
D.~Abel, W.~Dabney, A.~Harutyunyan, M.~K. Ho, M.~Littman, D.~Precup, and S.~Singh.
\newblock On the {Expressivity} of {Markov} {Reward}.
\newblock In \emph{Advances in {Neural} {Information} {Processing} {Systems}}, volume~34, pages 7799--7812. Curran Associates, Inc., 2021.
\newblock URL \url{https://papers.neurips.cc/paper/2021/hash/4079016d940210b4ae9ae7d41c4a2065-Abstract.html}.

\bibitem[Angluin(1987)]{angluin_learning_1987}
D.~Angluin.
\newblock Learning regular sets from queries and counterexamples.
\newblock \emph{Information and Computation}, 75\penalty0 (2):\penalty0 87--106, 1987.
\newblock ISSN 08905401.
\newblock \doi{10.1016/0890-5401(87)90052-6}.

\bibitem[Bellemare et~al.(2013)Bellemare, Naddaf, Veness, and Bowling]{bellemare_arcade_2013}
M.~G. Bellemare, Y.~Naddaf, J.~Veness, and M.~Bowling.
\newblock The {Arcade} {Learning} {Environment}: {An} {Evaluation} {Platform} for {General} {Agents}.
\newblock \emph{Journal of Artificial Intelligence Research}, 47:\penalty0 253--279, 2013.
\newblock ISSN 1076-9757.
\newblock \doi{10.1613/jair.3912}.

\bibitem[Bellman(1957)]{bellman_markovian_1957}
R.~Bellman.
\newblock A {Markovian} {Decision} {Process}.
\newblock \emph{Journal of Mathematics and Mechanics}, 6\penalty0 (5):\penalty0 679--684, 1957.
\newblock ISSN 0095-9057.
\newblock URL \url{https://www.jstor.org/stable/24900506}.

\bibitem[Brafman et~al.(2018)Brafman, Giacomo, and Patrizi]{brafman_ltlfldlf_2018}
R.~Brafman, G.~D. Giacomo, and F.~Patrizi.
\newblock {LTLf}/{LDLf} {Non}-{Markovian} {Rewards}.
\newblock In \emph{Proceedings of the {AAAI} {Conference} on {Artificial} {Intelligence}}, volume~32, 2018.
\newblock \doi{10.1609/aaai.v32i1.11572}.

\bibitem[Camacho and McIlraith(2021)]{camacho_learning_2021}
A.~Camacho and S.~A. McIlraith.
\newblock Learning {Interpretable} {Models} {Expressed} in {Linear} {Temporal} {Logic}.
\newblock In \emph{Proceedings of the {International} {Conference} on {Automated} {Planning} and {Scheduling}}, volume~29, pages 621--630, 2021.
\newblock \doi{10.1609/icaps.v29i1.3529}.

\bibitem[Camacho et~al.(2019)Camacho, Toro~Icarte, Klassen, Valenzano, and McIlraith]{camacho_ltl_2019}
A.~Camacho, R.~Toro~Icarte, T.~Q. Klassen, R.~Valenzano, and S.~A. McIlraith.
\newblock {LTL} and {Beyond}: {Formal} {Languages} for {Reward} {Function} {Specification} in {Reinforcement} {Learning}.
\newblock In \emph{Proceedings of the {International} {Joint} {Conference} on {Artificial} {Intelligence}}, volume~28, pages 6065--6073, 2019.
\newblock ISBN 978-0-9992411-4-1.
\newblock \doi{10.24963/ijcai.2019/840}.

\bibitem[Choi and Kim(2011)]{choi_inverse_2011}
J.~Choi and K.-E. Kim.
\newblock Inverse {Reinforcement} {Learning} in {Partially} {Observable} {Environments}.
\newblock \emph{Journal of Machine Learning Research}, 12\penalty0 (21):\penalty0 691--730, 2011.
\newblock ISSN 1533-7928.
\newblock URL \url{http://jmlr.org/papers/v12/choi11a.html}.

\bibitem[Christoffersen et~al.(2023)Christoffersen, Li, Icarte, and McIlraith]{christoffersen_learning_2020}
P.~J.~K. Christoffersen, A.~C. Li, R.~T. Icarte, and S.~A. McIlraith.
\newblock Learning {Symbolic} {Representations} for {Reinforcement} {Learning} of {Non}-{Markovian} {Behavior}, 2023.
\newblock URL \url{http://arxiv.org/abs/2301.02952}.

\bibitem[Dohmen et~al.(2022)Dohmen, Topper, Atia, Beckus, Trivedi, and Velasquez]{dohmen_inferring_2022}
T.~Dohmen, N.~Topper, G.~Atia, A.~Beckus, A.~Trivedi, and A.~Velasquez.
\newblock Inferring {Probabilistic} {Reward} {Machines} from {Non}-{Markovian} {Reward} {Signals} for {Reinforcement} {Learning}.
\newblock In \emph{Proceedings of the {International} {Conference} on {Automated} {Planning} and {Scheduling}}, volume~32, pages 574--582, 2022.
\newblock \doi{10.1609/icaps.v32i1.19844}.

\bibitem[Furelos-Blanco et~al.(2021)Furelos-Blanco, Law, Jonsson, Broda, and Russo]{furelos-blanco_induction_2021}
D.~Furelos-Blanco, M.~Law, A.~Jonsson, K.~Broda, and A.~Russo.
\newblock Induction and {Exploitation} of {Subgoal} {Automata} for {Reinforcement} {Learning}.
\newblock \emph{Journal of Artificial Intelligence Research}, 70:\penalty0 1031--1116, 2021.
\newblock ISSN 1076-9757.
\newblock \doi{10.1613/jair.1.12372}.

\bibitem[Gaon and Brafman(2019)]{gaon_reinforcement_2019}
M.~Gaon and R.~I. Brafman.
\newblock Reinforcement {Learning} with {Non}-{Markovian} {Rewards}.
\newblock In \emph{Proceedings of the {AAAI} {Conference} on {Artificial} {Intelligence}}, volume~34, 2019.
\newblock \doi{10.1609/aaai.v34i04.5814}.

\bibitem[{Gurobi Optimization, LLC}(2023)]{gurobi}
{Gurobi Optimization, LLC}.
\newblock {Gurobi Optimizer Reference Manual}, 2023.
\newblock URL \url{https://www.gurobi.com}.

\bibitem[Hausknecht and Stone(2015)]{hausknecht_deep_2015}
M.~Hausknecht and P.~Stone.
\newblock Deep {Recurrent} {Q}-{Learning} for {Partially} {Observable} {MDPs}.
\newblock In \emph{Proceedings of the {AAAI} {Fall} {Symposium}}, 2015.
\newblock URL \url{http://arxiv.org/abs/1507.06527}.

\bibitem[Icarte et~al.(2018)Icarte, Klassen, Valenzano, and McIlraith]{icarte_using_2018}
R.~T. Icarte, T.~Klassen, R.~Valenzano, and S.~McIlraith.
\newblock Using {Reward} {Machines} for {High}-{Level} {Task} {Specification} and {Decomposition} in {Reinforcement} {Learning}.
\newblock In \emph{Proceedings of the 35th {International} {Conference} on {Machine} {Learning}}, volume~80, pages 2107--2116, 2018.
\newblock URL \url{https://proceedings.mlr.press/v80/icarte18a.html}.

\bibitem[Icarte et~al.(2022)Icarte, Klassen, Valenzano, and McIlraith]{icarte_reward_2022}
R.~T. Icarte, T.~Q. Klassen, R.~Valenzano, and S.~A. McIlraith.
\newblock Reward {Machines}: {Exploiting} {Reward} {Function} {Structure} in {Reinforcement} {Learning}.
\newblock \emph{Journal of Artificial Intelligence Research}, 73:\penalty0 173--208, 2022.
\newblock ISSN 1076-9757.
\newblock \doi{10.1613/jair.1.12440}.

\bibitem[Li et~al.(2017)Li, Vasile, and Belta]{li_reinforcement_2017}
X.~Li, C.-I. Vasile, and C.~Belta.
\newblock Reinforcement learning with temporal logic rewards.
\newblock In \emph{2017 {IEEE}/{RSJ} {International} {Conference} on {Intelligent} {Robots} and {Systems} ({IROS})}, pages 3834--3839. IEEE, 2017.
\newblock ISBN 978-1-5386-2682-5.
\newblock \doi{10.1109/IROS.2017.8206234}.

\bibitem[Mnih et~al.(2013)Mnih, Kavukcuoglu, Silver, Graves, Antonoglou, Wierstra, and Riedmiller]{mnih_playing_2013}
V.~Mnih, K.~Kavukcuoglu, D.~Silver, A.~Graves, I.~Antonoglou, D.~Wierstra, and M.~Riedmiller.
\newblock Playing {Atari} with {Deep} {Reinforcement} {Learning}, 2013.
\newblock URL \url{http://arxiv.org/abs/1312.5602}.

\bibitem[Ng and Russell(2000)]{ng_algorithms_2000}
A.~Y. Ng and S.~J. Russell.
\newblock Algorithms for {Inverse} {Reinforcement} {Learning}.
\newblock In \emph{Proceedings of the {Seventeenth} {International} {Conference} on {Machine} {Learning}}, pages 663--670, 2000.
\newblock ISBN 978-1-55860-707-1.
\newblock \doi{10.5555/645529.657801}.

\bibitem[Omlin and Giles(1996)]{omlin_dfas_1996}
C.~W. Omlin and C.~Giles.
\newblock Extraction of rules from discrete-time recurrent neural networks.
\newblock \emph{Neural Networks}, 9\penalty0 (1):\penalty0 41--52, 1996.
\newblock ISSN 08936080.
\newblock \doi{10.1016/0893-6080(95)00086-0}.

\bibitem[Rens and Raskin(2020)]{rens_learning_2020}
G.~Rens and J.-F. Raskin.
\newblock Learning {Non}-{Markovian} {Reward} {Models} in {MDPs}, 2020.
\newblock URL \url{http://arxiv.org/abs/2001.09293}.

\bibitem[Rens et~al.(2021)Rens, Raskin, Reynouard, and Marra]{rens_online_2021}
G.~Rens, J.-F. Raskin, R.~Reynouard, and G.~Marra.
\newblock Online {Learning} of non-{Markovian} {Reward} {Models}:.
\newblock In \emph{Proceedings of the 13th {International} {Conference} on {Agents} and {Artificial} {Intelligence}}, pages 74--86, 2021.
\newblock ISBN 978-989-758-484-8.
\newblock \doi{10.5220/0010212000740086}.

\bibitem[Shvo et~al.(2021)Shvo, Li, Icarte, and McIlraith]{shvo_interpretable_2020}
M.~Shvo, A.~C. Li, R.~T. Icarte, and S.~A. McIlraith.
\newblock Interpretable {Sequence} {Classification} via {Discrete} {Optimization}.
\newblock In \emph{Proceedings of the {AAAI} {Conference} on {Artificial} {Intelligence}}, 2021.
\newblock \doi{10.1609/aaai.v35i11.17161}.

\bibitem[Sutton and Barto(2020)]{sutton98}
R.~S. Sutton and A.~Barto.
\newblock \emph{Reinforcement {Learning}: {An} {Introduction}}.
\newblock Adaptive computation and machine learning. The MIT Press, Cambridge, Massachusetts London, England, second edition edition, 2020.
\newblock ISBN 978-0-262-03924-6.

\bibitem[Tan et~al.(2017)Tan, Li, and Cheng]{tan_inverse_2017}
C.~Tan, Y.~Li, and Y.~Cheng.
\newblock An inverse reinforcement learning algorithm for semi-{Markov} decision processes.
\newblock In \emph{2017 {IEEE} {Symposium} {Series} on {Computational} {Intelligence} ({SSCI})}, pages 1--6, 2017.
\newblock \doi{10.1109/SSCI.2017.8280816}.

\bibitem[Thiébaux et~al.(2006)Thiébaux, Gretton, Slaney, Price, and Kabanza]{thiebaux_decision-theoretic_2006}
S.~Thiébaux, C.~Gretton, J.~Slaney, D.~Price, and F.~Kabanza.
\newblock Decision-{Theoretic} {Planning} with non-{Markovian} {Rewards}.
\newblock \emph{Journal of Artificial Intelligence Research}, 25\penalty0 (1):\penalty0 17--74, 2006.
\newblock ISSN 1076-9757.

\bibitem[Tiňo and Šajda(1995)]{tino_mealy_1995}
P.~Tiňo and J.~Šajda.
\newblock Learning and {Extracting} {Initial} {Mealy} {Automata} with a {Modular} {Neural} {Network} {Model}.
\newblock \emph{Neural Computation}, 7\penalty0 (4):\penalty0 822--844, 1995.
\newblock ISSN 0899-7667, 1530-888X.
\newblock \doi{10.1162/neco.1995.7.4.822}.

\bibitem[Toro~Icarte et~al.(2019)Toro~Icarte, Waldie, Klassen, Valenzano, Castro, and McIlraith]{toro_icarte_learning_2019}
R.~Toro~Icarte, E.~Waldie, T.~Klassen, R.~Valenzano, M.~Castro, and S.~McIlraith.
\newblock Learning {Reward} {Machines} for {Partially} {Observable} {Reinforcement} {Learning}.
\newblock In \emph{Advances in {Neural} {Information} {Processing} {Systems}}, volume~32. Curran Associates, Inc., 2019.
\newblock URL \url{https://proceedings.neurips.cc/paper_files/paper/2019/file/532435c44bec236b471a47a88d63513d-Paper.pdf}.

\bibitem[Watkins and Dayan(1992)]{watkins_q-learning_1992}
C.~J. C.~H. Watkins and P.~Dayan.
\newblock Q-learning.
\newblock \emph{Machine Learning}, 8\penalty0 (3):\penalty0 279--292, 1992.
\newblock ISSN 1573-0565.
\newblock \doi{10.1007/BF00992698}.

\bibitem[Weiss et~al.(2018)Weiss, Goldberg, and Yahav]{weiss_extracting_2020}
G.~Weiss, Y.~Goldberg, and E.~Yahav.
\newblock Extracting {Automata} from {Recurrent} {Neural} {Networks} {Using} {Queries} and {Counterexamples}.
\newblock In \emph{Proceedings of the 35th {International} {Conference} on {Machine} {Learning}}, volume~80, pages 5247--5256, 2018.
\newblock URL \url{https://proceedings.mlr.press/v80/weiss18a.html}.

\bibitem[Xu et~al.(2020)Xu, Gavran, Ahmad, Majumdar, Neider, Topcu, and Wu]{xu_joint_2020}
Z.~Xu, I.~Gavran, Y.~Ahmad, R.~Majumdar, D.~Neider, U.~Topcu, and B.~Wu.
\newblock Joint {Inference} of {Reward} {Machines} and {Policies} for {Reinforcement} {Learning}.
\newblock \emph{Proceedings of the International Conference on Automated Planning and Scheduling}, 30:\penalty0 590--598, 2020.
\newblock ISSN 2334-0843.
\newblock \doi{10.1609/icaps.v30i1.6756}.

\end{thebibliography}

\appendix
\onecolumn
\section*{\centering Appendix}
\section{Full ILP} \label{app:full}
\begin{align*}
     min \hspace{2pt} z = &  \sum_{(s, a, s) \in (S, A, S)_{o}}\sum_{i=1}^{K}\sum_{\substack{j=1, \\
    i \neq j}}^{K} I_{s, a, s', i, j} & & \hspace{2pt} (1)\\
    s.t. & & & \\
    & \sum_{i=1}^{K} \sum_{j=1}^{K} O_{m,n,i,j} = 1 & \forall m=1, 2, ..., M, \forall n = 1, 2, ... l_m & \hspace{2pt} (2) \\
    & \sum_{j=1}^{K} O_{m,1,1,j} = 1 & \forall m=1, 2, ..., M & \hspace{2pt} (3) \\
    & \sum_{i=1}^{K} O_{m,n,i,j} = \sum_{j'=1}^{K} O_{m, n+1, j, j'} & \forall m=1, 2, ..., M, \forall n = 1, 2, ..., l_m - 1, \forall j = 1, 2, ..., K & \hspace{2pt} (4) \\
    & \sum_{j = 1}^{K} I_{s, a, s', i, j} \leq 1 & \forall (s, a, s') \in (S, A, S)_{o}, \forall i = 1, 2, ..., K & \hspace{2pt} (5) \\
    & \sum_{r \in r(s, a, s')} I_{s, a, r, s', i, j} \leq 1 & \forall (s, a, s') \in (S, A, S)_{o}, \forall i = 1, 2, ..., K, \forall j = 1, 2, ..., K & \hspace{2pt} (6) \\
    & I_{s, a, s', i, j} =    \left[ \sum_{m=1}^M \sum_{\substack{n=1, \\
    (s, a, s') = (s_n^m, a_n^m, s_{n+1}^m)}}^{l_m} O_{m,n,i,j} \geq 1 \right] & \forall (s, a, s') \in (S, A, S)_{o}, \forall i = 1, 2, ..., K, \forall j = 1, 2, ..., K & \hspace{2pt} (7) \\
    & I_{s, a, r, s', i, j} = \left[ \sum_{m=1}^M \sum_{\substack{n=1, \\
    (s, a, r, s') = \tau_{m,n}}}^{l_m} O_{m,n,i,j} \geq 1 \right] & {\LARGE \substack{\forall (s, a, s') \in (S, A, S)_{o}, \forall r \in r(s, a, s'), \\ \hspace{30pt} \forall i = 1, 2, ..., K, \forall j = 1, 2, ..., K}} & \hspace{2pt} (8) \\
    & O_{m, n, i j} \in \{0, 1\} & {\LARGE \substack{\forall m=1, 2, ..., M, \forall n=1, 2, ..., l_m, \\ \hspace{13pt} \forall i = 1, 2, ..., K, \forall j = 1, 2, ..., K}} & \hspace{2pt} (9)
\end{align*}
The goal of our Integer Linear Program (ILP) formulation is to infer a valid Abstract Reward Markov Decision Process (ARMDP), as defined in Definition 1, in the main paper, from a set of observed trajectories, $T_o$, with observed reward conflicts. Reward conflicts are a result of an agent observing rewards under the lens of $S \times A \times S$, when in reality, rewards are sensitive to histories of the form $H=(S \times A \times \mathbb{R})^{*}$. We introduce some notation that our ILP leverages. Let, $\tau_m \in T_{o}$, be the m-th trace in $T_{o}$ of length $l_{m}$. Further, let $\tau_{m,n}=(s_n^m, a_n^m, r_{n+1}^m, s_{n+1}^m)$ be the n-th tuple of $\tau_m$. We define a reward conflict to be any two $\tau_{m,n}$ and $\tau_{\tilde{m}, \tilde{n}}$ where $(s_n^m, a_n^m, s_{n+1}^m) = (s_{\tilde{n}}^{\tilde{m}}, a_{\tilde{n}}^{\tilde{m}}, s_{\tilde{n}+1}^{\tilde{m}})$, but $r_{n+1}^m \neq r_{\tilde{n}+1}^{\tilde{m}}$.

We define the matrix, $O = M \times N \times K \times K$, where $M = |T_{o}|$, $N$ reflects the largest $l_m$, $\forall \tau_m \in T_{o}$, and $K = |U|$ reflects the size of the finite state space for some candidate Reward Machine (RM). $O_{m,n}$, then, is a $K \times K$ matrix that reflects the mapping of $\tau_{m,n} = (s_n^m, a_n^m, r_{n+1}^m s_{n+1}^M)$ onto the ARMDP, where both $s_n^m$ and $s_{n+1}^m$ are assigned an RM state. Additionally, we make use of two sets to gather necessary variables from $T_{o}$ for the construction of our ILP. These are $(S,A,S)_{o} = \{ (s_n^m, a_n^m, s_{n+1}^m) \mid m \in \{1, 2, ..., M\}, n \in \{ 1, 2, ..., l_m\}\}$ and $r(s, a, s') = \{ r_{n+1}^m \mid m \in \{1, 2, ..., M\}, n \in \{ 1, 2, ..., l_m\}, (s_n^m, a_n^m, s_{n+1}^m) = (s, a, s')\}$ where $(S,A,S)_{o}$ reflects the set of all observed transition tuples, $(s, a, s') \in S \times A \times S$, that exist in $T_{o}$, and $r(s, a, s')$ is the set of all observed rewards for some given $(s, a, s')$.

\section{j $\geq$ i Constraint} \label{app:ij}
This section describes the implications of imposing a $j \geq i$ constraint in the construction of $O$. Typically, $O_{m,n}$ is a $K \times K$ matrix that reflects the mapping of $\tau_{m,n} = (s_n^m, a_n^m, r_{n+1}^m s_{n+1}^M)$ onto the ARMDP, where both $s_n^m$ and $s_{n+1}^m$ are assigned an RM state:
\begin{equation*}
    O_{m,n} = \begin{pmatrix}
    O_{m,n,1,1} & O_{m,n,1,2} & \cdots & O_{m,n,1,K} \\
    O_{m,n,2,1} & O_{m,n,2,2} & \cdots & O_{m,n,2,K} \\
    \vdots & \vdots & \ddots & \vdots \\
    O_{m,n,K,1} & O_{m,n,K,2} & \cdots & O_{m,n,K,K}
\end{pmatrix}
\end{equation*}
When we impose a $j \geq i$, we only consider the upper right triangle matrix of $O_{m,n}$:
\begin{equation*}
    O_{m,n} = \begin{pmatrix}
    O_{m,n,1,1} & O_{m,n,1,2} & \cdots & O_{m,n,1,K} \\
    & O_{m,n,2,2} & \cdots & O_{m,n,2,K} \\
    & & \ddots & \vdots \\
    &  &  & O_{m,n,K,K}
\end{pmatrix}
\end{equation*}
The implication of this construction on the form of inferred ARMDP is that the underlying RM can never return to a prior state, once left. To reiterate our sentiments from the main paper, this does not limit the expressivity of the inferred ARMDP, it simply requires a larger $U$ in some instances. Consider the RMs in Figure \ref{app:figij}. Task (a) represents a patrol task identical to Task (e) in Figure 1, in the main paper. Even though Task (a) has atomic cycles (e.g., $u_1 \rightarrow u_1$), once a state is left, the RM never returns to it. For RMs like this, even with $j \geq i$ imposed, we still learn the \emph{minimal} RM. Conversely, Task (b) is a continuous patrol task, where on completion of a single patrol cycle, the RM begins again in $u_1$. Here, because the RM reuses previous states, the ARMDPs that we produce will continuously unroll these states and expand $U$ as needed. Instead of $u_1 \rightarrow u_2 \rightarrow u_3 \rightarrow u_4 \rightarrow u_1 ...$, the RM we infer will be represented as  $u_1 \rightarrow u_2 \rightarrow u_3 \rightarrow u_4 \rightarrow u_5 ... \hspace{2pt}$. However, because $T_{o}$ represents a finite set of finite trajectories, we will always be able to learn an ARMDP from it, though perhaps at the expense of excess computation in cycle decomposition. Regardless, all inferred ARMDPs will be equally expressive.

\begin{figure}
  \centering
  \includegraphics[width=0.9\textwidth]{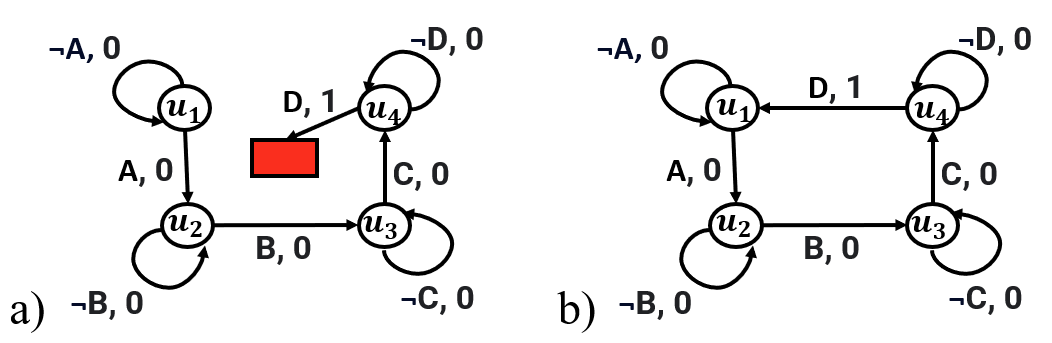}
  \caption{Patrol task where (a) patrols only once and (b) patrols continuously}
  \label{app:figij}
\end{figure}

\section{ARMDPs as NMRDP Proxies} \label{app:proof}
The goal of this section is to further elaborate on how the ARMDP is a sufficient proxy of the NMRDP for the purposes of maximizing reward expectations. We start by reintroducing the weighted Reward Sum (RS) for some $\tau_m \in T_{o}$:
\begin{equation}
    RS(\tau_m) = \sum_{n=1}^{l_m} r_{n+1}^m \cdot T(s_n^m, a_n^m, s_{n+1}^m)
\end{equation}
Given that we assume the form of $R$ to be an RM and that $\delta_u$ and $\delta_r$, in an RM, are deterministic functions, $RS(\tau_m)$ reliably captures the reward observations essential for constructing a reward expectation, assuming the trajectory sequence is preserved. Similarly, we construct a weighted Abstracted Reward Sum (ARS) for some  $\tau_m \in T_{o}$ under the representation of the ARMDP to be:
\begin{equation} 
    ARS(\tau_m) = \sum_{n=1}^{l_m} \sum_{i=1}^{K} \sum_{j=1}^{K} r_{n+1}^m \cdot T(s_n^m, a_n^m, s_{n+1}^m) \cdot O_{m, n, i, j}
\end{equation}
We remind readers that every $O_{m, n, i, j}$ reflects a binary variable that, if 1, allows us to conclude $\delta_u(u_i, L(s_n^m, a_n^m, s_{n+1}^m)) = u_j$. We reference \emph{Property 3} of the ARMDP:

\begin{equation*}
    \displaystyle
    \tilde{T}((s,u), a, (s', u')) = 
    \begin{cases}
        T(s, a, s') & \text{if\ } \delta_u(u, L(s, a, s')) = u' \\
        0 & \text{otherwise}
    \end{cases}
\end{equation*}
By multiplying $O_{m, n, i, j}$ into $T(s_n^m, a_n^m, s_{n+1}^m)$, $O_{m, n, i, j}$ serves as a logical gate to produce the dynamics of $\tilde{T}$. With this we reintroduce Theorem 1:
\newpage
\setcounter{theorem}{0}  
\begin{theorem}
$RS(\tau_m) = ARS(\tau_m)$ $\forall \tau_m \in T_{o}$.
\end{theorem}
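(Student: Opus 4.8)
The plan is to show that the two sums agree term-by-term in the index $n$, and that for each fixed $n$ the inner double sum over $(i,j)$ collapses to the single factor appearing in $RS$. The only ingredient needed is constraint~\eqref{eq:equal1} of the ILP, which every feasible ARMDP mapping satisfies.

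First I would fix $\tau_m \in T_o$ and an index $n \in \{1,\dots,l_m\}$ and isolate the corresponding summand of $ARS(\tau_m)$, namely $\sum_{i=1}^{K}\sum_{j=1}^{K} r_{n+1}^m \cdot T(s_n^m, a_n^m, s_{n+1}^m) \cdot O_{m,n,i,j}$. The scalar $r_{n+1}^m \cdot T(s_n^m, a_n^m, s_{n+1}^m)$ does not depend on $i$ or $j$, so it factors out of the double sum, leaving $r_{n+1}^m \cdot T(s_n^m, a_n^m, s_{n+1}^m) \cdot \bigl(\sum_{i=1}^{K}\sum_{j=1}^{K} O_{m,n,i,j}\bigr)$. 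By Equation~\eqref{eq:equal1}, the bracketed sum equals $1$, so this summand is exactly $r_{n+1}^m \cdot T(s_n^m, a_n^m, s_{n+1}^m)$, which is the $n$-th summand of $RS(\tau_m)$.

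Summing this identity over $n = 1, \dots, l_m$ yields $ARS(\tau_m) = \sum_{n=1}^{l_m} r_{n+1}^m \cdot T(s_n^m, a_n^m, s_{n+1}^m) = RS(\tau_m)$. Since $\tau_m$ was arbitrary, the equality holds for all $\tau_m \in T_o$.

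There is essentially no hard step here: the statement is a direct consequence of the fact that each $O_{m,n}$ selects exactly one RM-state assignment, encoded by~\eqref{eq:equal1}. The only thing worth stating explicitly is that we are reasoning about a \emph{feasible} solution of the ILP, so that~\eqref{eq:equal1} is indeed in force; given that, the remaining work is the elementary algebraic manipulation above. If the surrounding discussion wants more, one could additionally remark that, because $RS$ and $ARS$ agree on every observed trajectory and the ARMDP transition/reward functions $\tilde T, \tilde R$ are consistent (Properties 3--4, enforced by~\eqref{eq:ambig} and~\eqref{eq:rambig}), the discounted return of any trajectory computed in the ARMDP matches that computed in the NMRDP, which is the sense in which the ARMDP is a faithful proxy for reward-expectation maximization over $T_o$.
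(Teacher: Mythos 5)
Your proposal is correct and matches the paper's own proof: both factor the scalar $r_{n+1}^m \cdot T(s_n^m, a_n^m, s_{n+1}^m)$ out of the double sum over $(i,j)$ and invoke Equation~\eqref{eq:equal1} to collapse $\sum_{i}\sum_{j} O_{m,n,i,j}$ to $1$, then sum over $n$. The only difference is presentational—you argue summand-by-summand while the paper manipulates the whole sum at once—so there is nothing substantive to add.
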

\begin{proof}
\begin{align*}
    ARS(\tau_m) & = \sum_{n=1}^{l_m} \sum_{i=1}^{K} \sum_{j=1}^{K} r_{n+1}^m \cdot T(s_n^m, a_n^m, s_{n+1}^m) \cdot O_{m, n, i, j} \\
     & = \sum_{n=1}^{l_m} r_{n+1}^m \cdot T(s_n^m, a_n^m, s_{n+1}^m) \sum_{i=1}^{K} \sum_{j=1}^{K} O_{m, n, i, j} \\
     & = \sum_{n=1}^{l_m} r_{n+1}^m \cdot T(s_n^m, a_n^m, s_{n+1}^m) \cdot 1 \\
     & = \sum_{n=1}^{l_m} r_{n+1}^m \cdot T(s_n^m, a_n^m, s_{n+1}^m) =  RS(\tau_m)\\
\end{align*}
\end{proof}
Assuming $T_o$ is representative sample of the reward dynamics of $R$, in an NMRDP, by Theorem 1, the ARMDP is a suitable proxy for maximizing reward expectations for the NMRDP. 

\section{Experiment 1 Specifications}\label{app:exp1}
For all tasks in Experiment 1, we used $L:S \rightarrow 2^P$ and $j \geq i$ toggle construction. We adapted the Officeworld environment from https://github.com/RodrigoToroIcarte/reward\_machines.git. We noticed that in this implementation, the $*$ tile was a terminal state but resulted in a neutral reward of 0. We instead marked all $*$ tiles as terminal states with a reward of -1 to deter the agent from stepping onto them. Hyper parameters included $\gamma=0.95$, $\alpha=0.1$ and $\epsilon = 0.1$. The only exception is for Task (b) where we used $\epsilon = 0.2$ until the first conflict, to promote exploring. This is because the shortest path to the goal state included the necessary tile to trigger the non-Markov reward. As such, the Markov representation under $S$ and $A$ appeared sufficient to demonstrate optimal behavior. Even with $\epsilon=0.2$, for Task (b), 100,000 episodes was oftentimes not enough to trigger a conflict. To further accommodate this we put a sliding window on the total episode counts used to produce Figure 2, in the main paper, such that the 100,000 episodes contained the first conflict. This was strictly to produce a consistent figure over 10 randomly seeded trials. We include an extended version of Figure 2 from the main paper in Figure \ref{fig:exp1} containing information on $|U|$, $|T_{o}|$, as well as Cumulative Solve Times (in seconds). Machine specifications used in the experiment: 32 core Intel(R) Xeon(R) CPU E5-2640 v3 @ 2.60 GHz with 512 GB of Memory.

\begin{figure}
    \centering
    \includegraphics[width=0.9\textwidth, height=0.35 \textheight]{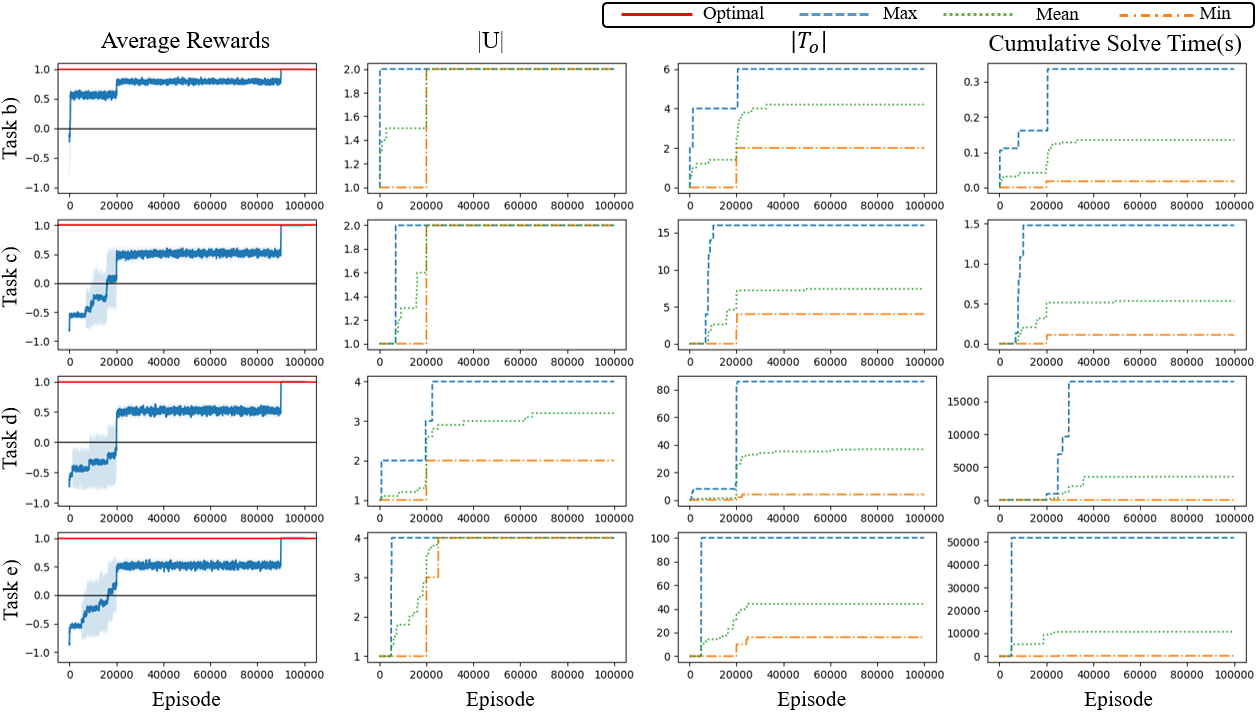}
    \caption{Results of ARMDPQ-Learning evaluated over 10 random trials on Tasks (b-e) in the Officeworld domain. Average Rewards column is displayed with standard deviations. $|U|$ column shows current assumed size of $U$. $|T_{o}|$ column shows current size of $T_{o}$. Cumulative Solve Time(s) represents the total solve time for all ARMDP inferences, in seconds.}
    \label{fig:exp1}
\end{figure}

\section{Experiment 2 Specifications}\label{app:exp2}
Each model in Experiment 2 was trained using a Double Deep Q-Network (DDQN) architecture, meaning that it leveraged a \emph{target} and \emph{non-target} model. All \emph{target} models were trained slowly off of their respective \emph{non-target} model using a smoothing factor of 0.01. Each model across the types (ADQN, DQN, RDQN, GRUDQN, and LSTMDQN) were all trained using a learning rate of 1e-3 with Adam optimizer. ADQN and DQN used a batch size of 32, whereas the RDQN, GRUDQN, and LSTMDQN models used a batch update by stacking a single trajectory. Other hyper parameters included $\gamma = 0.95$ and $\epsilon = 0.1$. Machine specifications used in the experiment: 32 core Intel(R) Xeon(R) CPU E5-2640 v3 @ 2.60 GHz with 512 GB of Memory.

\section{Experiment 3 Specifications} \label{app:exp3}
For all tasks in Experiment 3, we used $L:S \rightarrow 2^P$ and $j \geq i$ toggle construction. Hyper parameters included $\gamma=0.95$, $\alpha=0.1$ and $\epsilon = 0.1$. We applied a similar sliding window, as we did in Experiment 1, but under a lens of 50,000 episodes. We include an extended version of Table 1 in Figure \ref{fig:exp3}. Machine specifications used in the experiment: 32 core Intel(R) Xeon(R) CPU E5-2640 v3 @ 2.60 GHz with 512 GB of Memory.

\begin{figure}
    \centering
    \includegraphics[width=0.9\textwidth, height=0.35 \textheight]{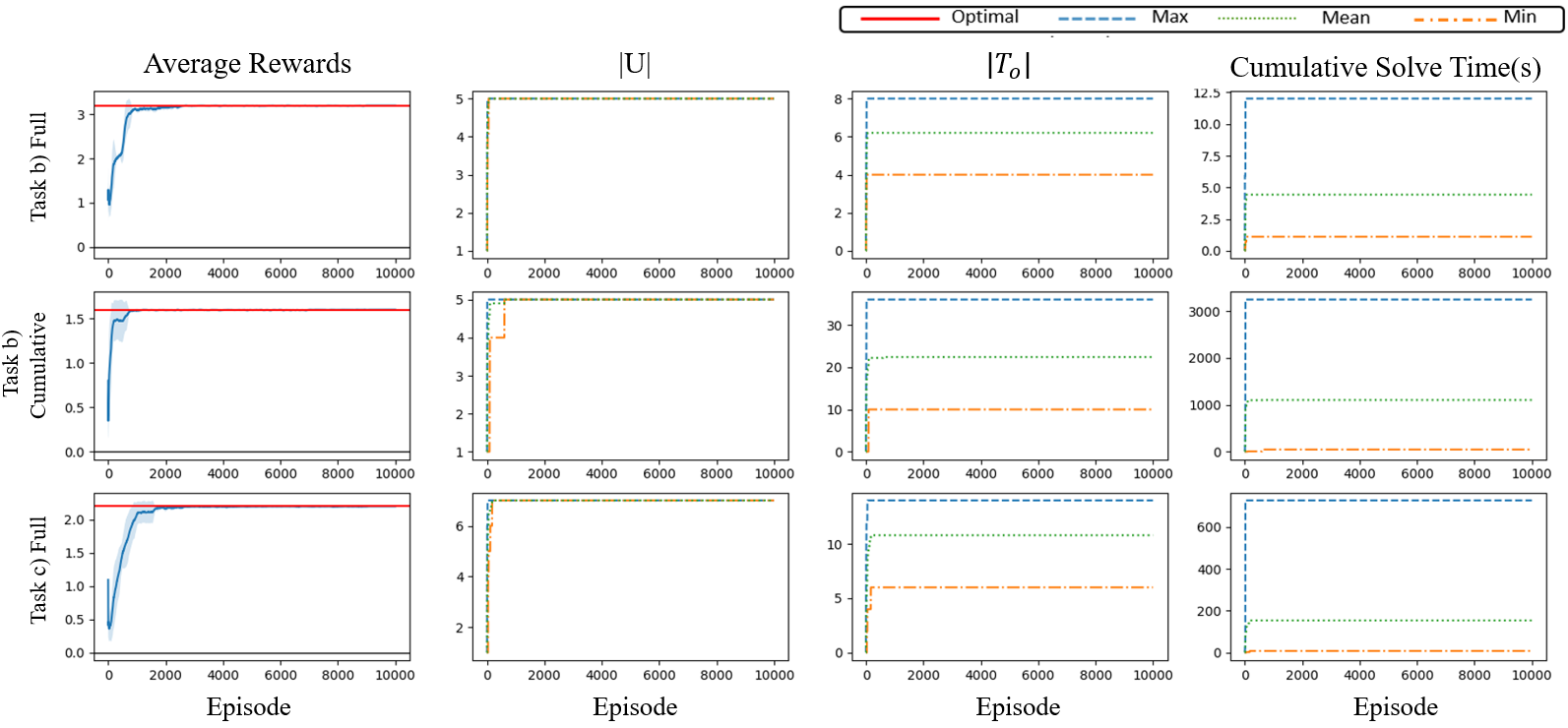}
    \caption{Results of ARMDPQ-Learning evaluated over 10 random trials on Tasks (b) and (c) from the Breakfastworld domain. Task (b) and (c) Full show results when using the full RM for Task (b) and (c). Task (b) Cumulative shows results for using the RM for Task (b) where all rewards are 0 except the terminal cumulative rewards. Average Rewards column is displayed with standard deviations. $|U|$ column shows current assumed size of $U$. $|T_{o}|$ column shows current size of $T_{o}$. Cumulative Solve Time(s) represents the total solve time for all ARMDP inferences, in seconds.}
    \label{fig:exp3}
\end{figure}

\section{Code}
All code available at https://github.com/GregHydeDartmouth/ARMDP\_RL.git

\end{document}